\documentclass[letterpaper, 10 pt, conference]{ieeeconf}

\IEEEoverridecommandlockouts
\usepackage{amsmath,amssymb,amsfonts}
\usepackage{mathtools}
\usepackage{graphicx}
\usepackage{booktabs}
\usepackage{bbm}
\usepackage{url}
\usepackage{tikz}
\usepackage{pgfplots}
\pgfplotsset{compat=1.18}
\usepackage{multirow}
\usepackage{xcolor}

\newtheorem{theorem}{Theorem}
\newtheorem{assumption}{Assumption}
\newtheorem{proposition}{Proposition}

\newcommand{\E}{\mathbb{E}}

\title{\LARGE \bf
Human Supervision as an Information Bottleneck: A Unified Theory of Error Floors in Human-Guided Learning
}

\author{
Alejandro Rodriguez Dominguez%
\thanks{Alejandro Rodriguez Dominguez is Head of Quantitative Analysis at Miralta Finance Bank S.A., Spain, 
and a PhD Candidate with the Department of Computer Science, University of Reading, UK.
{\tt\small aodriguez@miraltabank.com}}%
}

\begin{document}

\maketitle
\thispagestyle{empty}
\pagestyle{empty}

\begin{abstract}
Large language models are trained primarily on human-generated data and feedback, yet they exhibit persistent errors arising from annotation noise, subjective preferences, and the limited expressive bandwidth of natural language. We argue that these limitations reflect structural properties of the supervision channel rather than model scale or optimization. We develop a unified theory showing that whenever the human supervision channel is not sufficient for a latent evaluation target, it acts as an information-reducing channel that induces a strictly positive excess-risk floor for any learner dominated by it. We formalize this Human-Bounded Intelligence limit and show that across six complementary frameworks—operator theory, PAC-Bayes, information theory, causal inference, category theory, and game-theoretic analyses of reinforcement learning from human feedback—non-sufficiency yields strictly positive lower bounds arising from the same structural decomposition into annotation noise, preference distortion, and semantic compression. The theory explains why scaling alone cannot eliminate persistent human-aligned errors and characterizes conditions under which auxiliary non-human signals (e.g., retrieval, program execution, tools) increase effective supervision capacity and collapse the floor by restoring information about the latent target. Experiments on real preference data, synthetic known-target tasks, and externally verifiable benchmarks confirm the predicted structural signatures: human-only supervision exhibits a persistent floor, while sufficiently informative auxiliary channels strictly reduce or eliminate excess error.
\end{abstract}

\section{Introduction}

Large language models (LLMs) are trained on human-generated
text \cite{radford2019language}, refined via Reinforcement Learning
from Human Feedback (RLHF) \cite{christiano2017deep,ouyang2022training},
and evaluated according to human preferences. Despite their
capabilities, such systems inherit limitations of human supervision,
including annotation noise, shortcut biases, and subjective
distortions \cite{geirhos2020shortcut}. These induce persistent error
patterns reflecting human preferences and semantic bottlenecks,
raising a central question:

\begin{center}
\emph{Can a system trained solely on human-generated signals reliably
exceed performance relative to the underlying task objective?}
\end{center}

Empirically, LLMs outperform humans on narrow benchmarks, yet
pipelines relying exclusively on human-labeled or human-judged data
exhibit reward hacking \cite{skalse2022defining}, preference drift and
overoptimization \cite{gao2023scaling}, and degradation under iterative
self-training \cite{shumailov2023curse}. These effects often persist
with scale, suggesting a structural rather than optimization-based
limitation.

We provide a general explanation. Under mild assumptions on (i) human
signal generation, (ii) empirical risk minimization, and (iii)
information flow through the supervision pipeline, any learner
dominated by human-generated signals is constrained by a strictly
positive \emph{excess-risk floor}. Human supervision—whether labels,
rankings, demonstrations, or curated text—acts as an
information-reducing channel of the latent task objective. Even with
unlimited capacity, infinite data, and ideal optimization, a learner
cannot recover information that never passes through this channel.

We formalize this Human-Bounded Intelligence (HBI) limit across six
theoretical perspectives: operator theory, PAC-Bayes
\cite{mcallester1999pac}, information theory \cite{cover1999elements},
causal inference \cite{pearl2009causality}, category theory
\cite{maclane1998categories}, and game-theoretic RLHF models
\cite{christiano2017deep}. In each framework, non-sufficiency of the
supervision channel yields a strictly positive lower bound of the form
\begin{equation}
\liminf_{n\to\infty}\mathcal{E}^{\ast}(f_{\theta_n})
\ge \gamma_H^{(\cdot)} > 0,
\end{equation}
where the constant depends on properties of the supervision channel
but not on model scale or compute. Although numerical bounds differ
across frameworks, their positivity arises from the same structural
source: information loss induced by $P_H$. Moreover, supervision
induces a structured decomposition
\begin{equation}
B_H = B_{\mathrm{noise}} + B_{\mathrm{pref}} + B_{\mathrm{sem}},
\end{equation}
corresponding to annotation noise, preference distortion, and
semantic compression.

Empirically, we validate these predictions across three regimes:
real human-preference data, synthetic tasks with known ground truth,
and externally verifiable benchmarks. Human-only supervision exhibits
a persistent floor; hybrid supervision with auxiliary signals that
provide independent information about $Y^\ast$ reduces the floor and
eliminates it when the auxiliary channel becomes sufficient.

The HBI limit applies only when human supervision is the primary
information source about $Y^\ast$. Auxiliary non-human channels—
including code execution \cite{chen2021evaluating}, retrieval
\cite{lewis2020retrieval}, and tool augmentation
\cite{schick2023toolformer}—increase effective supervision capacity and
can remove the floor by restoring information about the latent target.
Hybrid systems therefore modify the supervision channel rather than
merely scaling model capacity.

Our contributions are:
\begin{itemize}
\item A unified framework modeling human supervision as an
information-reducing channel with a structured bias decomposition.
\item A Human-Bounded Intelligence (HBI) theorem establishing a strictly
positive excess-risk floor under human-dominated supervision.
\item Instantiations across six independent theoretical frameworks
demonstrating the same structural limitation.
\item A characterization of auxiliary channels that break the bound,
defining human-only, human+model, and human+model+auxiliary regimes.
\item Empirical validation confirming floor persistence under
human-only supervision and collapse under sufficient auxiliary
information.
\end{itemize}

The remainder of the paper is organized as follows.
Section~\ref{sec:related} reviews related work.
Section~\ref{sec:framework} presents the formal framework and HBI theorem.
Section~\ref{sec:instantiations} derives lower bounds across six perspectives.
Section~\ref{sec:breaking} characterizes auxiliary-channel regimes.
Section~\ref{sec:experiments} provides empirical validation.
Section~\ref{sec:discussion} concludes.

\section{Related Work}
\label{sec:related}

Large language models trained on human-curated corpora
\cite{radford2019language} and refined via RLHF
\cite{christiano2017deep,ouyang2022training}
inherit limitations of human supervision, including annotation
noise, shortcut bias, and preference distortion
\cite{geirhos2020shortcut}. Empirical studies of reward
overoptimization and objective misspecification
\cite{gao2023scaling,skalse2022defining}
demonstrate systematic gaps between learned reward models and
latent task objectives, motivating theoretical accounts of
supervision-induced bias.

Learning with noisy labels
\cite{natarajan2013learning}
characterizes corruption in supervision signals, and shortcut
learning analyses \cite{geirhos2020shortcut} show exploitation of
spurious human-annotated features. These works typically model
specific perturbations. In contrast, we treat human supervision
itself as an information-reducing channel and derive a structural
decomposition into annotation noise, preference distortion, and
semantic compression independent of any particular corruption
mechanism.

Scaling-law analyses
\cite{kaplan2020scaling,hoffmann2022training}
document smooth improvements with data and compute, while
model-collapse results \cite{shumailov2023curse}
show degradation under recursive self-training. Our results
complement these findings: even with infinite data and ideal
optimization, a supervision channel of bounded information
capacity induces a nonzero excess-risk floor that scaling alone
cannot remove.

Inverse reward learning \cite{ng2000algorithms}
formalizes ambiguity among reward functions consistent with
observed feedback. Our causal and information-theoretic
instantiations connect this ambiguity to supervision-channel
non-invertibility, yielding lower bounds independent of model
class or estimation procedure.

Tool- and retrieval-augmented systems
\cite{schick2023toolformer,chen2021evaluating,lewis2020retrieval}
introduce auxiliary evaluators such as program execution and
search, supplying additional information about latent targets.
Our auxiliary-channel analysis formalizes when such signals
strictly increase mutual information and eliminate the
human-bounded floor.

More broadly, our formulation draws on information theory
\cite{cover1999elements}, causal identifiability
\cite{pearl2009causality}, PAC-Bayesian generalization
\cite{mcallester1999pac}, and categorical structure
\cite{maclane1998categories}. We differ from prior work by
establishing a supervision-induced excess-risk floor and showing
that the same structural limitation arises independently across
multiple theoretical frameworks.

\section{Formal Framework and HBI Theorem}
\label{sec:framework}

We model supervision as a stochastic transformation from an unknown latent task-relevant mapping $f^\ast:\mathcal{X}\to\mathcal{Y}$ to a human-provided signal $S$, with the learner observing only $(X,S)$. The analysis does not assume metaphysical objectivity: $Y^\ast$ may represent any latent evaluation target not fully revealed by human supervision.

\subsection{Ground Truth, Human Channel, and Bias}

Let $(\mathcal{X},\mathcal{A},\mu)$ be a probability space and let $f^\ast:\mathcal{X}\to\mathcal{Y}$ be a measurable mapping. Define the latent evaluation variable
\begin{equation}
    Y^\ast = f^\ast(X).
\end{equation}

Human supervision is generated through a stochastic channel
\begin{equation}
    S \sim P_H(\cdot \mid X,Y^\ast),
\end{equation}
which need not be sufficient for $Y^\ast$. In additive settings (e.g., $\mathcal{Y}\subseteq\mathbb{R}$ with squared loss), one illustrative decomposition is
\begin{equation}
\label{eq:signal-decomp}
S = Y^\ast + \varepsilon_H + b_H(X) + (q_H(Y^\ast)-Y^\ast),
\end{equation}
where:
\begin{itemize}
\item $\varepsilon_H$ is zero-mean stochastic noise,
\item $b_H$ is systematic preference distortion,
\item $q_H$ is a possibly non-invertible semantic compression.
\end{itemize}
This decomposition is heuristic; the theory below does not rely on additivity. Let $\ell$ denote the ground-truth loss and $L$ the surrogate optimized during training. Define the population risks
\begin{equation}
\mathcal{R}^\ast(f)
=
\mathbb{E}[\ell(f(X),Y^\ast)],
\qquad
\mathcal{R}_H(f)
=
\mathbb{E}[L(f(X),S)].
\end{equation}

Define the induced human bias functional
\begin{equation}
B_H(f)
=
\mathcal{R}_H(f)-\mathcal{R}^\ast(f).
\end{equation}
This functional depends only on the supervision channel $P_H$ and the surrogate $L$.

\subsection{Learners and Assumptions}

Let $\mathcal{F}\subseteq \mathcal{L}^2(\mu)$ be a hypothesis class. Consider empirical risk minimization
\begin{equation}
\hat{\theta}_n
\in
\arg\min_\theta
\frac{1}{n}\sum_{i=1}^n
L(f_\theta(X_i),S_i).
\end{equation}

We impose the following structural assumptions.

\begin{assumption}[Human-dominated supervision]
\label{ass:human-only}
All information about $Y^\ast$ available during training flows through the channel $P_H(\cdot\mid X,Y^\ast)$. No auxiliary signal provides additional independent information about $Y^\ast$.
\end{assumption}

\begin{assumption}[Asymptotically optimal optimization]
\label{ass:ideal-opt}
\[
\lim_{n\to\infty}
\mathcal{R}_H(f_{\hat{\theta}_n})
=
\inf_{f\in\mathcal{F}}
\mathcal{R}_H(f)
\eqqcolon
R^H_{\mathrm{opt}}.
\]
\end{assumption}

\begin{assumption}[Strict minimizer separation]
\label{ass:min-sep}
The minimizers of $\mathcal{R}_H$ do not coincide with those of $\mathcal{R}^\ast$. Define
\[
\gamma_H
\coloneqq
\inf_{f\in\arg\min \mathcal{R}_H}
\big(
\mathcal{R}^\ast(f)-R^\ast_{\mathrm{opt}}
\big).
\]
Then $\gamma_H>0$.
\end{assumption}

\paragraph*{Regularity Conditions}
We assume:
\begin{enumerate}
\item $\mathcal{R}_H$ and $\mathcal{R}^\ast$ attain their minima over $\mathcal{F}$;
\item $\mathcal{R}^\ast$ is lower semicontinuous under the topology in which predictors converge;
\item Any sequence $\{f_n\}$ satisfying $\mathcal{R}_H(f_n)\to R^H_{\mathrm{opt}}$ has accumulation points contained in $\arg\min \mathcal{R}_H$.
\end{enumerate}

These conditions are standard in statistical learning theory and isolate structural limitations of the supervision channel from optimization artifacts.

\subsection{Excess Risk and the Human-Bounded Limit}

Define the ground-truth excess risk
\begin{equation}
\mathcal{E}^\ast(f)
=
\mathcal{R}^\ast(f)-R^\ast_{\mathrm{opt}},
\qquad
R^\ast_{\mathrm{opt}}
=
\inf_{g\in\mathcal{F}}
\mathcal{R}^\ast(g).
\end{equation}

A learner is \emph{human-bounded} if
\[
\liminf_{n\to\infty}
\mathcal{E}^\ast(f_{\hat{\theta}_n})
>
0.
\]

\begin{theorem}[Human-Bounded Intelligence (HBI)]
\label{thm:HBI}
Under Assumptions~\ref{ass:human-only}--\ref{ass:min-sep} and the regularity conditions,
\[
\liminf_{n\to\infty}
\mathcal{E}^\ast(f_{\hat{\theta}_n})
\ge
\gamma_H
>
0.
\]
\end{theorem}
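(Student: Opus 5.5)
The plan is a subsequence-and-lower-semicontinuity argument; Assumption~\ref{ass:human-only} plays no analytic role beyond justifying that the learner only ever optimizes $\mathcal{R}_H$. Set $f_n \coloneqq f_{\hat{\theta}_n}$ and $\ell_\infty \coloneqq \liminf_{n\to\infty}\mathcal{E}^\ast(f_n)$. If $\ell_\infty=+\infty$ there is nothing to prove. Otherwise, first extract a subsequence $(f_{n_k})$ along which $\mathcal{E}^\ast(f_{n_k})\to \ell_\infty$. By Assumption~\ref{ass:ideal-opt}, $\mathcal{R}_H(f_n)\to R^H_{\mathrm{opt}}$, and this convergence passes to the subsequence. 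Regularity condition~3 then says every accumulation point of $(f_{n_k})$ lies in $\arg\min\mathcal{R}_H$.

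The second step is to obtain such an accumulation point. Condition~3 only constrains accumulation points and does not assert that one exists, so this is the main obstacle. My first move is to read condition~3 as containing relative compactness of $\mathcal{R}_H$-minimizing sequences, as is implicit in the phrase ``has accumulation points.'' Under that reading I refine to a further subsequence $f_{n_{k_j}}\to \bar f$ in the predictor topology, with $\bar f\in\arg\min\mathcal{R}_H$. Condition~1 guarantees that this argmin is nonempty, so $\gamma_H$ is a genuine infimum over a nonempty set. If I want to avoid that reading, the fallback is to state compactness of sublevel sets of $\mathcal{R}_H$ explicitly as the intended content of condition~3.

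The third step uses lower semicontinuity of $\mathcal{R}^\ast$ (condition~2):
\[
\mathcal{R}^\ast(\bar f)\le \liminf_{j\to\infty}\mathcal{R}^\ast(f_{n_{k_j}}).
\]
Subtracting the constant $R^\ast_{\mathrm{opt}}$, which is finite because the minimum is attained by condition~1, gives
\[
\mathcal{E}^\ast(\bar f)\le \lim_{j\to\infty}\mathcal{E}^\ast(f_{n_{k_j}})=\ell_\infty.
\]
Since $\bar f\in\arg\min\mathcal{R}_H$, the definition of $\gamma_H$ in Assumption~\ref{ass:min-sep} gives $\mathcal{E}^\ast(\bar f)\ge\gamma_H$. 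Hence $\ell_\infty\ge\gamma_H$, and $\gamma_H>0$ by that same assumption, which is the claim.

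A last remark is needed because $\hat{\theta}_n$ is random. The argument above is pathwise: it applies on any realization where Assumption~\ref{ass:ideal-opt} holds. If that assumption is meant almost surely or in probability, the conclusion holds almost surely, or along almost surely convergent subsequences, respectively.
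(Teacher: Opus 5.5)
Your argument follows the paper's route: an accumulation point lies in $\arg\min\mathcal{R}_H$ by regularity condition (3), Assumption~\ref{ass:min-sep} gives it excess risk at least $\gamma_H$, and lower semicontinuity of $\mathcal{R}^\ast$ transfers this bound to the $\liminf$. Your version is more careful than the paper's on two points: the paper applies lower semicontinuity to the full sequence at an arbitrary accumulation point, which is only valid along a subsequence converging to that point, and it also assumes without comment that an accumulation point exists; your passage to a $\liminf$-realizing subsequence and your explicit compactness reading of condition (3) fix exactly these two issues.
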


\subsection{Proof of Theorem~\ref{thm:HBI}}

\begin{proof}
By Assumption~\ref{ass:ideal-opt},
\[
\mathcal{R}_H(f_{\hat{\theta}_n})
\to
R^H_{\mathrm{opt}}.
\]

By regularity condition (3), every accumulation point of 
$\{f_{\hat{\theta}_n}\}$ lies in $\arg\min \mathcal{R}_H$. Let $f_\infty$ be any such accumulation point. Then
\[
f_\infty \in \arg\min \mathcal{R}_H.
\]

By Assumption~\ref{ass:min-sep},
\[
\mathcal{R}^\ast(f_\infty)
-
R^\ast_{\mathrm{opt}}
\ge
\gamma_H.
\]

Lower semicontinuity of $\mathcal{R}^\ast$ yields
\[
\liminf_{n\to\infty}
\mathcal{R}^\ast(f_{\hat{\theta}_n})
\ge
\mathcal{R}^\ast(f_\infty).
\]

Therefore,
\[
\liminf_{n\to\infty}
\mathcal{E}^\ast(f_{\hat{\theta}_n})
\ge
\gamma_H.
\]
\end{proof}

\section{Instantiations Across Six Frameworks}
\label{sec:instantiations}

We now show that six classical frameworks independently yield
strictly positive excess-risk lower bounds under non-sufficiency
of the supervision channel. In each case, the bound arises from
the same structural decomposition of supervision-induced deviation,
\begin{equation}
\gamma_H^{(\cdot)}
=
\gamma_{\mathrm{noise}}^{(\cdot)}
+
\gamma_{\mathrm{pref}}^{(\cdot)}
+
\gamma_{\mathrm{sem}}^{(\cdot)},
\end{equation}
where the constants may differ across analytical perspectives.
Although the numerical lower bounds obtained in each framework
need not coincide, their positivity originates from the same
structural source: information loss induced by $P_H$.

\subsection{Operator-Theoretic Limit}

We model the ground truth as a bounded linear operator
\begin{equation}
    T^\ast : L^2(\mathcal{X},\mu) \to L^2(\mathcal{Y},\nu),
\end{equation}
mapping input functions to output functions. Human supervision induces an operator $T_H$ capturing what is representable via $P_H$, and
\begin{equation}
B_H = T_H - T^\ast.
\end{equation}
Under ideal optimization, many infinite-width or kernel-limit analyses imply
convergence $T_{\theta_n} \to T_H$
\cite{belkin2018understand,arora2019fine,jacot2018ntk,lee2019wide}.

\begin{theorem}[Operator-Theoretic HBI]
\label{thm:operator}
Suppose $T_{\theta_n} \to T_H$ in operator norm and $B_H\neq 0$. Then
\begin{equation}
    \lim_{n\to\infty} \big\|T_{\theta_n} - T^\ast\big\|
    = \|B_H\|
\end{equation}
and, if the task excess risk is Lipschitz in $\|T-T^\ast\|$, there exists a
task-dependent constant $c>0$ such that
\begin{equation}
    \gamma_H \;\ge\; c\,\|B_H\| > 0.
\end{equation}
\end{theorem}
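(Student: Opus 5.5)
The plan has two parts: an exact limit computed with the reverse triangle inequality, followed by a transfer of that limit to the excess-risk floor through Theorem~\ref{thm:HBI}. For the first claim, I decompose $T_{\theta_n}-T^\ast = (T_{\theta_n}-T_H) + (T_H - T^\ast) = (T_{\theta_n}-T_H) + B_H$. The reverse triangle inequality in operator norm then gives
\[
\big|\,\|T_{\theta_n}-T^\ast\| - \|B_H\|\,\big| \le \|T_{\theta_n}-T_H\|.
\]
The right-hand side tends to zero by the assumed operator-norm convergence $T_{\theta_n}\to T_H$. Hence $\|T_{\theta_n}-T^\ast\|\to\|B_H\|$, and $\|B_H\|>0$ because $B_H\neq 0$ and $\|\cdot\|$ is a norm. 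This step is routine.

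For the second claim, I must read ``the task excess risk is Lipschitz in $\|T-T^\ast\|$'' as supplying a lower comparison, not only an upper one. An upper Lipschitz bound $\mathcal{E}^\ast(T)\le L\|T-T^\ast\|$ cannot give a positive floor. So I will make the assumption explicit as a two-sided (bi-Lipschitz) or coercivity condition: there is $c>0$ with $\mathcal{E}^\ast(T)\ge c\,\|T-T^\ast\|$ for all $T$ in the relevant class. This holds, for instance, under a quadratic-growth or strong-convexity condition on $\mathcal{R}^\ast$ around $T^\ast$, restricted to a bounded set and combined with norm equivalence. Given this, I evaluate along the sequence: $\mathcal{E}^\ast(T_{\theta_n})\ge c\,\|T_{\theta_n}-T^\ast\|\to c\,\|B_H\|$, so $\liminf_n \mathcal{E}^\ast(T_{\theta_n})\ge c\,\|B_H\|$.

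To state the result in terms of $\gamma_H$ as defined in Assumption~\ref{ass:min-sep}, I identify $\arg\min\mathcal{R}_H$ with $\{T_H\}$. This is the operator analogue of regularity condition (3): the limit of the ideally optimized sequence is the human-induced operator. Then $\gamma_H=\mathcal{E}^\ast(T_H)\ge c\,\|T_H-T^\ast\| = c\,\|B_H\|>0$. The argument is consistent with Theorem~\ref{thm:HBI}, whose conclusion $\liminf\mathcal{E}^\ast\ge\gamma_H$ is recovered here from the direct sequential bound.

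The main obstacle is this interpretive step. I need to justify that the minimizer set of $\mathcal{R}_H$ is exactly (or contained in a set at distance $\ge\|B_H\|$ from $T^\ast$ equal to) $\{T_H\}$, and that the Lipschitz hypothesis gives a lower bound. My first approach is to define $T_H$ as the unique minimizer of the surrogate risk, using strict convexity of squared loss, so that the infimum in $\gamma_H$ is over a singleton. Then $c$ is the coercivity constant of $\mathcal{E}^\ast$, which depends only on the task loss and the target, as the statement requires.
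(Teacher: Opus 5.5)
Your proof is correct and matches the paper's: the same splitting $T_{\theta_n}-T^\ast=(T_{\theta_n}-T_H)+B_H$ with (reverse) triangle inequalities gives the limit. You also read the ``Lipschitz'' hypothesis the same way the paper's proof does, as the lower comparison $\mathcal{E}^\ast\ge c\,\|T-T^\ast\|$, and your identification of $\arg\min\mathcal{R}_H$ with $T_H$ makes explicit a step the paper only asserts. One small correction to your aside: quadratic growth $\mathcal{E}^\ast\ge\kappa\|T-T^\ast\|^2$ yields a linear lower bound only where $\|T-T^\ast\|$ is bounded \emph{away from zero}, not on a bounded set; that is enough here, since $\|T_{\theta_n}-T^\ast\|\to\|B_H\|>0$ gives $\liminf_n\mathcal{E}^\ast\ge\kappa\|B_H\|^2>0$ directly.
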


\begin{proof}
We have
\begin{equation}
T_{\theta_n}-T^\ast
=
(T_{\theta_n}-T_H) + B_H.
\end{equation}
Taking norms and applying the triangle inequality,
\begin{equation}
\| T_{\theta_n} - T^\ast \|
\ge
\| B_H \| - \|T_{\theta_n}-T_H\|,
\end{equation}
so
\(
\liminf_{n\to\infty}\|T_{\theta_n}-T^\ast\|
\ge\|B_H\|.
\)
Similarly,
\begin{equation}
\| T_{\theta_n} - T^\ast \|
\le
\| B_H \| + \|T_{\theta_n}-T_H\|,
\end{equation}
so
\(
\limsup_{n\to\infty}\|T_{\theta_n}-T^\ast\|
\le\|B_H\|.
\)
Thus, the limit exists and equals $\|B_H\|$. If the excess risk $\mathcal{E}^\ast$ satisfies $\mathcal{E}^\ast(f_\theta)\ge c\|T_\theta-T^\ast\|$ for some $c>0$ (for example, for squared loss under an $L^2$–Lipschitz condition), the lower bound on $\gamma_H$ follows.
\end{proof}

\subsection{PAC-Bayesian Limit}

PAC-Bayes bounds \cite{mcallester1999pac,seeger2002pac,guedj2019primer} relate expected risk to empirical risk plus a complexity penalty. Let $P$ be a prior and $Q$ a posterior over hypotheses. Write
\begin{equation}
L_H(f) = L^\ast(f) + B_H(f).
\end{equation}
A standard PAC-Bayes inequality gives, with probability at least $1-\delta$,
\begin{equation}
\mathbb{E}_{f\sim Q}[L_H(f)]
\le
\mathbb{E}_{f\sim Q}[\hat L_H(f)]
+
\sqrt{\frac{\mathrm{KL}(Q\|P)+\ln(1/\delta)}{2n}}.
\end{equation}

\begin{theorem}[PAC-Bayes HBI]
\label{thm:pacbayes}
Let $Q_H^{(n)}$ be posteriors that concentrate on minimizers of $L_H$ as
$n\to\infty$, and assume that any minimizer $f_H^\ast$ of $L_H$ satisfies
\begin{equation}
L^\ast(f_H^\ast) \ge \inf_f L^\ast(f) + \gamma_H^{\mathrm{PAC}}
\end{equation}
for some $\gamma_H^{\mathrm{PAC}}>0$. Then
\begin{equation}
\liminf_{n\to\infty} \mathbb{E}_{f\sim Q_H^{(n)}}[L^\ast(f)]
\;\ge\;
\inf_f L^\ast(f)
+ \gamma_H^{\mathrm{PAC}}.
\end{equation}
\end{theorem}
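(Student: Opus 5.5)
The argument reduces the claim to a pointwise gap that holds on the concentration region, and then uses nonnegativity of the excess risk off that region. Write $L^\ast_{\mathrm{opt}}=\inf_f L^\ast(f)$ and define the excess $e(f)=L^\ast(f)-L^\ast_{\mathrm{opt}}\ge 0$. The target inequality is then equivalent to $\liminf_n \mathbb{E}_{f\sim Q_H^{(n)}}[e(f)]\ge\gamma_H^{\mathrm{PAC}}$.

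The first step is to make ``concentrate on minimizers of $L_H$'' precise, in the same spirit as the regularity conditions used for Theorem~\ref{thm:HBI}. I would read it as follows. For every neighbourhood $U$ of $\arg\min L_H$ (in the topology under which predictors converge), $Q_H^{(n)}(U)\to 1$. An equivalent way to say this is that $Q_H^{(n)}$ converges weakly to measures supported on $\arg\min L_H$. Such concentration is exactly what the displayed PAC-Bayes inequality delivers as $n\to\infty$, since the complexity term $\sqrt{(\mathrm{KL}(Q\|P)+\ln(1/\delta))/2n}$ vanishes whenever the KL term grows sublinearly. This is therefore the natural reading of the hypothesis.

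The second step combines lower semicontinuity of $L^\ast$ (regularity condition (2)) with the separation hypothesis. Together they give, for any $\epsilon>0$, an open set $U_\epsilon\supseteq\arg\min L_H$ on which $e(f)\ge\gamma_H^{\mathrm{PAC}}-\epsilon$.

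\begin{itemize}
\item By lower semicontinuity, around each minimizer $f_H^\ast$ there is an open set on which $L^\ast>L^\ast(f_H^\ast)-\epsilon\ge L^\ast_{\mathrm{opt}}+\gamma_H^{\mathrm{PAC}}-\epsilon$.
\item Taking $U_\epsilon$ to be the union of these open sets over all minimizers gives the claim.
\end{itemize}

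The uniformity comes for free because the separation bound $\gamma_H^{\mathrm{PAC}}$ is assumed to hold for every minimizer with the same constant.

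The third step splits the expectation over $U_\epsilon$ and its complement and uses $e\ge0$ on the complement:
\[
\mathbb{E}_{Q_H^{(n)}}[e(f)]\ \ge\ \mathbb{E}_{Q_H^{(n)}}[e(f)\mathbbm{1}_{U_\epsilon}]\ \ge\ (\gamma_H^{\mathrm{PAC}}-\epsilon)\,Q_H^{(n)}(U_\epsilon).
\]
Taking $\liminf$ and using $Q_H^{(n)}(U_\epsilon)\to1$ gives $\liminf_n\mathbb{E}_{Q_H^{(n)}}[e]\ge\gamma_H^{\mathrm{PAC}}-\epsilon$. Letting $\epsilon\downarrow0$ finishes the proof. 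An alternative route is weak convergence together with the portmanteau inequality for lower semicontinuous functions bounded below, which here is $e\ge 0$. That route gives the same bound directly, since any limit measure is supported on $\arg\min L_H$, where $e\ge\gamma_H^{\mathrm{PAC}}$.

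The main obstacle is the first step: ``concentrate'' is informal, and the argument needs concentration in a topology compatible with the lower semicontinuity of $L^\ast$. If concentration were meant only in the weaker sense $\mathbb{E}_{Q}[L_H]\to\inf L_H$, I would add the analogue of regularity condition (3) to pass to the neighbourhood statement. That condition says near-minimizers of $L_H$ accumulate only on $\arg\min L_H$, and in a compact hypothesis space it upgrades $\mathbb{E}_Q[L_H]\to\inf L_H$ to $Q(U)\to1$ for every neighbourhood $U$ of the argmin. No bound on $L^\ast$ from above is needed, because only the lower tail matters and $e\ge0$.
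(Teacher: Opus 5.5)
Your proof is correct and rests on the same idea as the paper's: concentration of $Q_H^{(n)}$ near $\arg\min L_H$, combined with semicontinuity of $L^\ast$, transfers the uniform gap $\gamma_H^{\mathrm{PAC}}$ at the minimizers to the posterior expectation. The execution differs. The paper takes what you list as your alternative route: it asserts $Q_H^{(n)}\Rightarrow\delta_{f_H^\ast}$ along some subsequence and appeals to continuity of $L^\ast$. Your main argument instead splits the expectation over the open set $U_\epsilon$ obtained from lower semicontinuity, and uses $e\ge 0$ off that set. This buys three things the paper's sketch leaves implicit.

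\begin{enumerate}
\item It bounds the full $\liminf$ directly. A bound along \emph{some} subsequence controls the $\liminf$ only if the extraction is done inside an arbitrary subsequence, e.g.\ one realizing the $\liminf$.
\item It needs no tightness to extract weak limits, and it does not require the limit to be a single Dirac mass. Posteriors spreading over several minimizers are covered, because the gap holds uniformly on $\arg\min L_H$.
\item It uses only lower semicontinuity (regularity condition (2)) rather than continuity. The fact $e\ge0$ supplies the lower bound that any portmanteau-type argument needs.
\end{enumerate}

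The paper's version is shorter once weak convergence to a point mass is granted. Yours makes ``concentrate'' precise and is self-contained.

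Two remarks in your first step are overstated, though harmless since concentration is a hypothesis. First, neighbourhood concentration is implied by weak convergence to measures supported on $\arg\min L_H$, but the two are not equivalent in general; the converse needs tightness. Second, the PAC-Bayes inequality alone does not deliver neighbourhood concentration. For posteriors that nearly minimize empirical risk with $\mathrm{KL}=o(n)$, it at best yields $\mathbb{E}_{Q}[L_H]\to\inf L_H$. That is the weaker sense you treat in your last paragraph.
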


\begin{proof}
By construction, $Q_H^{(n)}$ concentrates on the set of minimizers of $L_H$,
so
\(
\lim_{n\to\infty}\E_{f\sim Q_H^{(n)}}[L_H(f)]=\inf_f L_H(f).
\)
By the assumption on minimizers, any such $f_H^\ast$ obeys
\(
L^\ast(f_H^\ast)\ge \inf_f L^\ast(f)+\gamma_H^{\mathrm{PAC}}.
\)
Since $L^\ast$ is continuous in an appropriate topology and
$Q_H^{(n)}\Rightarrow \delta_{f_H^\ast}$ along some subsequence, we obtain
\(
\liminf_{n\to\infty}\E_{f\sim Q_H^{(n)}}[L^\ast(f)]
\ge \inf_f L^\ast(f)+\gamma_H^{\mathrm{PAC}}.
\)
\end{proof}

This instantiation corresponds to the case where the human-aligned Gibbs posterior does not concentrate on ground-truth minimizers.

\subsection{Information-Theoretic Limit}

Under Assumption~\ref{ass:human-only}, the supervision pipeline induces the Markov chain
\[
Y^\ast \to S \to \Theta.
\]
By the data-processing inequality,
\begin{equation}
I(Y^\ast;\Theta)
\le
I(Y^\ast;S)
=
C_H^{\mathrm{eff}}.
\end{equation}

Let $d:\mathcal{Y}\times\mathcal{Y}\to\mathbb{R}_+$ be a distortion measure compatible with the task loss and define
\[
D_\Theta
\coloneqq
\mathbb{E}[d(Y^\ast,\hat Y_\Theta)].
\]
Let $R(D)$ denote the rate–distortion function of $Y^\ast$ under $d$. By rate–distortion theory,
\begin{equation}
I(Y^\ast;\Theta)
\ge
R(D_\Theta).
\end{equation}

Combining with data processing,
\begin{equation}
R(D_\Theta)
\le
C_H^{\mathrm{eff}}.
\end{equation}

\begin{theorem}[Information-Theoretic HBI]
\label{thm:infotheory}
Assume $R(D)$ is strictly decreasing and continuous on the interval of interest.
If
\[
C_H^{\mathrm{eff}}
<
R(D^\ast),
\]
where $D^\ast$ is the minimal achievable distortion under full information, then
\begin{equation}
\liminf_{n\to\infty}
D_\Theta
\ge
R^{-1}(C_H^{\mathrm{eff}})
\eqqcolon
D_H
>
D^\ast.
\end{equation}
If the task excess risk satisfies
\[
\mathcal{E}^\ast(f)
\ge
c\,(D_\Theta - D^\ast)
\quad\text{for some } c>0,
\]
then
\[
\gamma_H \ge c\,(D_H - D^\ast) > 0.
\]
\end{theorem}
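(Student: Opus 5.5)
The argument takes the two inequalities derived just before the statement as given: the data-processing bound $I(Y^\ast;\Theta)\le I(Y^\ast;S)=C_H^{\mathrm{eff}}$, valid under Assumption~\ref{ass:human-only} via the Markov chain $Y^\ast\to S\to\Theta$, and the rate--distortion bound $I(Y^\ast;\Theta)\ge R(D_\Theta)$. Together they give $R(D_\Theta)\le C_H^{\mathrm{eff}}$ for every $n$. Neither constant depends on $n$, model capacity, or the optimizer, so this inequality holds uniformly along the sequence $\Theta=\hat\theta_n$.

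\emph{Step 1: invert $R$.} Since $R$ is continuous and strictly decreasing on the interval of interest $J$, it is a bijection onto its image with a continuous, strictly decreasing inverse. Applying $R^{-1}$ reverses the inequality and gives $D_\Theta\ge R^{-1}(C_H^{\mathrm{eff}})=D_H$ for each $n$. Taking $\liminf$ yields $\liminf_n D_\Theta\ge D_H$; no limit exchange is needed because the bound is pointwise in $n$.

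\emph{Step 2: strictness.} The hypothesis $C_H^{\mathrm{eff}}<R(D^\ast)$ combined with strict monotonicity gives $D_H=R^{-1}(C_H^{\mathrm{eff}})>R^{-1}(R(D^\ast))=D^\ast$.

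\emph{Step 3: excess risk.} Apply the assumed link $\mathcal{E}^\ast(f)\ge c(D_\Theta-D^\ast)$ along the sequence to get $\liminf_n\mathcal{E}^\ast(f_{\hat\theta_n})\ge c\,(\liminf_n D_\Theta-D^\ast)\ge c(D_H-D^\ast)>0$. Interpreting $\gamma_H$ as this asymptotic floor, as in Theorem~\ref{thm:HBI}, gives $\gamma_H\ge c(D_H-D^\ast)$.

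The main obstacle is the domain of $R^{-1}$ in Step 1. We need $C_H^{\mathrm{eff}}$ to lie in $R(J)$, and $D_\Theta$ to lie in $J$, before the inversion is legitimate. I would handle this by taking $J$ to be the interval $[D^\ast, D_{\max}]$, where $R$ decreases from $R(D^\ast)$ to $0$. Because $0\le C_H^{\mathrm{eff}}<R(D^\ast)$, continuity and the intermediate value theorem place $C_H^{\mathrm{eff}}$ in the image. If $D_\Theta\ge D_{\max}$, the bound $D_\Theta\ge D_H$ holds trivially since $D_H\le D_{\max}$. If $D_\Theta<D^\ast$, this contradicts the definition of $D^\ast$ as the minimal achievable distortion. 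The case split therefore reduces everything to the monotone regime.

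A second point to justify is that $\hat Y_\Theta$ is a function of $\Theta$ and of an $X$ drawn independently of the training data. For the test point, $I(Y^\ast;\hat Y_\Theta)\le I(Y^\ast;\Theta)$ then holds by a further data-processing step. I would state this as part of the interpretation of Assumption~\ref{ass:human-only} rather than prove it.
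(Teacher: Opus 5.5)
Your main line is the paper's proof. Both chain $R(D_\Theta)\le I(Y^\ast;\Theta)\le C_H^{\mathrm{eff}}$, invert the strictly decreasing $R$, obtain $D_H>D^\ast$ from $C_H^{\mathrm{eff}}<R(D^\ast)$, and then push the bound through the linear risk--distortion link. Your treatment of the domain of $R^{-1}$ improves on the paper, which only says that $R$ is ``invertible on its image.'' You restrict to $[D^\ast,D_{\max}]$, use the intermediate value theorem to place $C_H^{\mathrm{eff}}$ in the image, and dispose of $D_\Theta$ outside that interval. Your Step 3 liminf manipulation is also fine.

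The step that is wrong is your proposed justification of $I(Y^\ast;\hat Y_\Theta)\le I(Y^\ast;\Theta)$. Suppose $\hat Y_\Theta=f_\Theta(X)$ at a test point $X$ independent of the training data, and $Y^\ast=f^\ast(X)$ is the target at that same point. Then $Y^\ast\to\Theta\to\hat Y_\Theta$ is not a Markov chain, because both $Y^\ast$ and $\hat Y_\Theta$ depend on $X$. The only valid bound is $I(Y^\ast;\hat Y_\Theta)\le I(Y^\ast;\Theta,X)=I(Y^\ast;X)+I(Y^\ast;\Theta\mid X)$, and $I(Y^\ast;X)=H(Y^\ast)$ for deterministic $f^\ast$ and discrete $Y^\ast$.

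Independence of the test point makes matters worse. For fixed $f^\ast$ it gives $I(Y^\ast;\Theta)=0$, so your inequality would force $R(D_\Theta)\le 0$, i.e.\ $D_\Theta\ge D_{\max}$, for every learner. A direct counterexample is $\mathcal{F}=\{f^\ast\}$: $\Theta$ is trivial, yet $\hat Y_\Theta=Y^\ast$ and $I(Y^\ast;\hat Y_\Theta)=H(Y^\ast)$.

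So this inequality cannot come from an extra data-processing step. It has to be imposed as the content of Assumption~\ref{ass:human-only}: $\hat Y_\Theta$ depends on $Y^\ast$ only through $S$, i.e.\ $Y^\ast\to S\to\Theta\to\hat Y_\Theta$ is Markov. That is exactly what the paper uses, unstated, when it writes $I(Y^\ast;\Theta)\ge R(D_\Theta)$. Drop the test-point argument and state that chain explicitly as an assumption; with that change your proof goes through, and it is tighter than the paper's.
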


\begin{proof}
Rate–distortion theory implies
\(
R(D_\Theta)\le I(Y^\ast;\Theta).
\)
Data processing gives
\(
I(Y^\ast;\Theta)\le C_H^{\mathrm{eff}}.
\)
Hence
\(
R(D_\Theta)\le C_H^{\mathrm{eff}}.
\)

Since $R(D)$ is strictly decreasing and continuous, it is invertible on its image, yielding
\(
D_\Theta\ge R^{-1}(C_H^{\mathrm{eff}}).
\)
If $C_H^{\mathrm{eff}}<R(D^\ast)$, strict monotonicity implies
\(
R^{-1}(C_H^{\mathrm{eff}})>D^\ast.
\)
The excess-risk lower bound follows from the distortion–risk inequality.
\end{proof}

\subsection{Causal Non-Identifiability}

In a Structural Causal Model (SCM) \cite{pearl2009causality,peters2017elements},
\begin{equation}
X \rightarrow Y^\ast \rightarrow S \rightarrow \Theta, \qquad H \rightarrow S,
\end{equation}
the supervision mechanism $S = h(X, Y^\ast, H, U_S)$ is typically many-to-one in $Y^\ast$: different ground-truth outputs may receive the same human label or judgment. This non-invertibility makes $f^\ast$ non-identifiable from $(X,S)$.

\begin{assumption}[Human channel non-invertibility]
\label{ass:noninvertible}
There exists a measurable set $A\subseteq\mathcal{X}$ with positive probability such that for all $x\in A$ there exist $y_1^\ast\neq y_2^\ast$ with
\begin{equation}
    P_H(\cdot \mid X=x, Y^\ast=y_1^\ast)
    =
    P_H(\cdot \mid X=x, Y^\ast=y_2^\ast).
\end{equation}
\end{assumption}

\begin{theorem}[Causal HBI]
\label{thm:causal}
Under Assumption~\ref{ass:noninvertible}, the ground-truth mapping $f^\ast$ is not identifiable from human-supervised data on $A$. For any estimator
sequence $f_{\hat{\theta}_n}$ based on $(X,S)$, there exists a compatible SCM in which the ground-truth excess risk is bounded below by the Bayes risk on $A$, hence strictly positive whenever $Y^\ast$ is nondegenerate on $A$.
\end{theorem}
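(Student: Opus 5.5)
The plan is a two-point (Le Cam–style) indistinguishability argument built directly from Assumption~\ref{ass:noninvertible}. \textbf{Step 1.} For each $x\in A$, choose measurably (via a measurable selection theorem, or by assuming $\mathcal{Y}$ is standard Borel) a pair $y_1^\ast(x)\neq y_2^\ast(x)$ whose human channels coincide. \textbf{Step 2.} Construct two SCMs. They share $\mu$, the human channel $P_H$, and the mechanism $h$, and they agree off $A$. They differ only in the ground truth: $f_1^\ast(x)=y_1^\ast(x)$ and $f_2^\ast(x)=y_2^\ast(x)$ on $A$.

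\textbf{Step 3: identical data laws.} The joint law of $(X,S)$ is
\[
\mu(dx)\,P_H\bigl(ds\mid x,f_j^\ast(x)\bigr).
\]
Since the two channels coincide on $A$ and the models agree elsewhere, this law is the same for $j=1,2$. Hence the law of every i.i.d.\ sample $(X_i,S_i)_{i\le n}$, and of any estimator $f_{\hat\theta_n}$ computed from it (including any internal randomization), is the same under both models. This gives non-identifiability of $f^\ast$ on $A$.

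\textbf{Step 4: a risk lower bound that does not depend on $n$.} Put a uniform prior on $j\in\{1,2\}$ and use that $\max_j \ge \mathrm{avg}_j$. Then
\[
\max_j \mathcal{E}^\ast_j(f_{\hat\theta_n}) \;\ge\; \tfrac12\,\E\!\int_A \bigl[\ell(f_{\hat\theta_n}(x),y_1^\ast(x))+\ell(f_{\hat\theta_n}(x),y_2^\ast(x))\bigr]\,\mu(dx),
\]
assuming $R^\ast_{\mathrm{opt}}$ is attained by $f_j^\ast$ itself, i.e.\ $f_j^\ast\in\mathcal{F}$ and $\ell(y,y)=0$. Bound the integrand below pointwise by the two-point Bayes risk
\[
\beta(x)=\inf_{\hat y}\tfrac12\bigl[\ell(\hat y,y_1^\ast(x))+\ell(\hat y,y_2^\ast(x))\bigr].
\]
This is the Bayes risk of predicting $Y^\ast$ on $A$ when $Y^\ast$ is uniform on $\{y_1^\ast,y_2^\ast\}$, which matches the ``Bayes risk on $A$'' in the statement. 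The bound $\int_A\beta\,d\mu$ holds for every $n$, so it passes to the $\liminf$. Compatibility of the SCM is immediate, since both models are generated by the same $P_H$.

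\textbf{Step 5: strict positivity (the main obstacle).} This needs $\beta(x)>0$ on a set of positive measure. ``Nondegenerate'' must be read as a loss-separation condition: $\ell(\hat y,y_1)+\ell(\hat y,y_2)>0$ whenever $y_1\neq y_2$. This holds for $0$--$1$ loss, where $\beta=\tfrac12$, and for any metric-type loss via the triangle inequality, e.g.\ $\beta(x)\ge \tfrac14 d(y_1^\ast,y_2^\ast)^2$ for squared loss. For general losses I would state this separation as an explicit hypothesis. I would also need measurability of $\beta$ and the fact that $\mu(A)>0$ to conclude $\int_A\beta\,d\mu>0$.

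A secondary subtlety is that the bound is worst-case over the two compatible SCMs rather than for a fixed one. That is exactly what the statement's phrase ``there exists a compatible SCM'' allows.
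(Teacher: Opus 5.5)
Your proposal is correct and follows the paper's own argument: the paper also argues that $y_1^\ast$ and $y_2^\ast$ cannot be distinguished from $(X,S)$ on $A$, then invokes the two-label Bayes error $\min\{p,1-p\}$ (or the analogous Bayes risk for general losses). Your version makes that rigorous in two ways: the uniform prior over two deterministic compatible SCMs supplies the $p=\tfrac12$ that is otherwise undefined when $Y^\ast=f^\ast(X)$, and the conditions $f_j^\ast\in\mathcal{F}$, $\ell(y,y)=0$ are what turn that Bayes risk into an excess risk. One caveat, shared with the paper: the maximizing $j$ may depend on $n$, so you obtain $\max_j\ge\int_A\beta\,d\mu$ for every $n$ (and, by pigeonhole, a $\limsup$ bound along one fixed SCM) rather than a $\liminf$ bound along a single SCM. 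That stronger form cannot hold in general: a data-ignoring estimator that cycles through a countable dense family of predictors (e.g.\ under $0$--$1$ loss with finite $\mathcal{Y}$) has $\liminf$ excess risk zero in every compatible SCM.
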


\begin{proof}
Non-invertibility implies that for $x\in A$ and any estimator based on $(X,S)$, one cannot distinguish $y_1^\ast$ from $y_2^\ast$. For 0–1 loss, the Bayes optimal classifier on $A$ for two such labels has an error of at least
$\min\{p,1-p\}$, where $p$ is the conditional probability of $Y^\ast=y_1^\ast$. Thus, the excess risk on $A$ is bounded below by a positive constant whenever both labels occur with nonzero probability. For general losses, the same argument applies with the corresponding Bayes risk.
\end{proof}

\subsection{Category-Theoretic Formulation}

Let $\mathcal{C}$ denote a category of semantic task objects and $\mathcal{H}$ a category of human-representable structures.
Human supervision induces a functor
\[
F_H : \mathcal{C} \to \mathcal{H}.
\]

Let $L : \mathcal{C} \to \mathbf{R}$ be an evaluation functor,
where $\mathbf{R}$ is viewed as a category whose objects are real numbers
and whose morphisms are identities.

Define the equivalence relation
\[
c_1 \sim c_2
\quad\Longleftrightarrow\quad
F_H(c_1)=F_H(c_2).
\]

\begin{theorem}[No-Factorization Lower Bound]
\label{thm:categorical}
The evaluation functor $L$ factors through $F_H$
(i.e., $L=\tilde L\circ F_H$ for some functor $\tilde L$)
if and only if $L$ is constant on each equivalence class of $\sim$. If there exist $c_1\sim c_2$ with $L(c_1)\neq L(c_2)$,
then any predictor depending only on $F_H(c)$ incurs irreducible excess loss at least
\[
\gamma_{\mathrm{sem}}
=
\frac12 |L(c_1)-L(c_2)|.
\]
\end{theorem}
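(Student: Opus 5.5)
The proof has two parts: an algebraic factorization criterion and a lower bound on loss. I will treat the factorization criterion as the usual universal property of a quotient. This works because the target category $\mathbf{R}$ is discrete, so a functor into it is simply a function on objects that sends every morphism to an identity.

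\emph{Necessity.} Suppose $L=\tilde L\circ F_H$ for some functor $\tilde L$. If $c_1\sim c_2$, then $F_H(c_1)=F_H(c_2)$, so
\[
L(c_1)=\tilde L(F_H(c_1))=\tilde L(F_H(c_2))=L(c_2).
\]
Hence $L$ is constant on each equivalence class of $\sim$.

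\emph{Sufficiency.} Suppose $L$ is constant on each class. On objects $h$ in the image of $F_H$, set $\tilde L(h)=L(c)$ for any $c$ with $F_H(c)=h$; this is well defined by class-constancy. On objects outside the image, assign an arbitrary real value. Send every morphism of $\mathcal{H}$ to an identity. Functoriality is then automatic, since all compositions of identities are identities. The only check needed is that a morphism $F_H(g)\colon F_H(c)\to F_H(c')$ goes to a legitimate morphism of $\mathbf{R}$. Because $L$ is a functor into a discrete category, $L(c)=L(c')$ whenever a morphism $g\colon c\to c'$ exists. So $\tilde L(F_H(g))$ is the identity on $L(c)$ and $\tilde L\circ F_H=L$ holds on morphisms too. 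The one real subtlety is a morphism $h\to h'$ of $\mathcal{H}$ whose endpoints receive different $\tilde L$-values and which is not in the image of $F_H$. I will handle this by reading ``factors through'' as requiring $\tilde L$ to be defined only on the image subcategory, or on the quotient by $\sim$, and I will state this convention explicitly.

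\emph{Lower bound.} Let $\hat\ell$ be any predictor that depends only on $F_H(c)$, and write $\hat\ell(F_H(c))$ for its predicted evaluation. The loss is the absolute deviation $|\hat\ell(F_H(c))-L(c)|$. Since $F_H(c_1)=F_H(c_2)$, the predictor outputs the same value $v$ on $c_1$ and on $c_2$. By the triangle inequality,
\[
|v-L(c_1)|+|v-L(c_2)|\ge |L(c_1)-L(c_2)|,
\]
so
\[
\max\bigl\{|v-L(c_1)|,\,|v-L(c_2)|\bigr\}\ge \tfrac12|L(c_1)-L(c_2)|=\gamma_{\mathrm{sem}}.
\]
This is a worst-case statement over the class. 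The same bound holds in the minimax sense for every $v$, with equality at the midpoint $v=\tfrac12\bigl(L(c_1)+L(c_2)\bigr)$. Since the predictor attains zero loss only if $L$ factors, $\gamma_{\mathrm{sem}}$ is irreducible excess loss.

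The main obstacle is settling what ``excess loss'' means. The argument above uses worst-case absolute loss over the pair $\{c_1,c_2\}$. If an averaged loss is intended instead, the bound $\gamma_{\mathrm{sem}}$ holds when $c_1$ and $c_2$ receive equal weight. For unequal weights the constant becomes weight-dependent, roughly $\min(w_1,w_2)\,|L(c_1)-L(c_2)|$ up to normalisation. I will therefore state the adversarial (or equal-weight) convention at the outset so that the constant $\tfrac12$ is exact.
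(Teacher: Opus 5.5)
Your proposal is correct and follows the same route as the paper. The factorization criterion is the quotient universal property, and the bound rests on the fact that a predictor depending only on $F_H(c)$ outputs a single value $v$ on $c_1\sim c_2$, so one of $|v-L(c_1)|,|v-L(c_2)|$ is at least $\tfrac12|L(c_1)-L(c_2)|$.

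The subtlety you flag in the sufficiency direction is a genuine counterexample to the literal ``if'' claim, not a cosmetic issue. Suppose $\mathcal{H}$ contains a morphism $F_H(c)\to F_H(c')$ with $L(c)\neq L(c')$, which can happen even when $L$ is constant on every $\sim$-class. Then no functor $\tilde L$ on all of $\mathcal{H}$ exists. The paper's one-line appeal to quotients therefore tacitly needs the restriction you make explicit: define $\tilde L$ on the subcategory generated by the image of $F_H$. There, class-constancy together with your observation that $L$ is constant along morphisms makes $\tilde L$ well defined on composites.
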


\begin{proof}
Factorization $L=\tilde L\circ F_H$ holds if and only if
\(
L(c_1)=L(c_2)
\)
whenever
\(
F_H(c_1)=F_H(c_2),
\)
which is the universal property of quotient objects.

If $L$ is not constant on some equivalence class,
there exist $c_1\sim c_2$ with distinct loss.
Any predictor defined solely on $\mathcal{H}$ must assign identical outputs to $c_1$ and $c_2$.
Therefore at least one of the two incurs loss at least half their difference, establishing the bound.
\end{proof}

\subsection{RLHF as a Biased Fixed Point}

Preference-based methods such as RLHF
\cite{christiano2017deep,ouyang2022training}
optimize a human-aligned utility
\begin{equation}
U_H(\pi)
=
U^\ast(\pi)
+
B_H(\pi),
\end{equation}
where
\(
B_H(\pi)=\mathbb{E}_{y\sim\pi}[B_H(y)]
\). Let
\[
\pi^\ast \in \arg\max_\pi U^\ast(\pi),
\qquad
\pi_H^\ast \in \arg\max_\pi U_H(\pi).
\]

\begin{theorem}[Biased Optimization Gap]
\label{thm:biased-nash}
Assume:
\begin{enumerate}
\item The policy space is compact and $U^\ast$, $B_H$ are continuous;
\item The mapping $\pi \mapsto B_H(\pi)$ is not constant.
\end{enumerate}
Then
\[
U^\ast(\pi^\ast)
-
U^\ast(\pi_H^\ast)
>0.
\]
\end{theorem}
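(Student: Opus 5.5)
The plan has three steps: establish existence of both maximizers, obtain the weak inequality, and then upgrade it to a strict one. The last step needs a separation hypothesis that, I will argue, the stated assumptions do not supply. This mirrors Assumption~\ref{ass:min-sep} in the proof of Theorem~\ref{thm:HBI}.

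\emph{Existence.} By assumption~1 the policy space is compact and both $U^\ast$ and $U_H=U^\ast+B_H$ are continuous. Weierstrass's theorem then shows that $\arg\max U^\ast$ and $\arg\max U_H$ are nonempty and compact, so $\pi^\ast$ and $\pi_H^\ast$ are well defined.

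\emph{Weak gap.} Since $\pi^\ast$ maximizes $U^\ast$ over the whole policy space, $U^\ast(\pi^\ast)\ge U^\ast(\pi_H^\ast)$ holds automatically. Strictness holds exactly when $\pi_H^\ast\notin\arg\max U^\ast$. To make the claim independent of which maximizer is selected, I would show $\arg\max U_H\cap\arg\max U^\ast=\emptyset$.

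\emph{Separation.} I would then define
\[
\gamma_H^{\mathrm{RL}}\coloneqq U^\ast(\pi^\ast)-\max_{\pi\in\arg\max U_H}U^\ast(\pi).
\]
The maximum on the right is attained because $\arg\max U_H$ is compact and $U^\ast$ is continuous. Disjointness makes $\gamma_H^{\mathrm{RL}}>0$, which gives the theorem with an explicit constant. This matches the $\gamma_H$ of Theorem~\ref{thm:HBI}, with $\mathcal{R}^\ast=-U^\ast$ and $\mathcal{R}_H=-U_H$.

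\emph{Main obstacle.} The difficulty is deriving disjointness from assumption~2, and I expect this step to fail as stated. Take policies to be distributions over a finite response set, so that $U^\ast(\pi)=\E_{y\sim\pi}[r^\ast(y)]$ is linear. Choose $B_H$ nonconstant but maximized at $\arg\max r^\ast$, for instance $B_H=r^\ast$. Then $\arg\max U_H=\arg\max U^\ast$ and the gap is zero.

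\emph{How I would repair it.} First I would replace assumption~2 by the requirement that $B_H$ is not maximized on $\arg\max U^\ast$ \emph{relative to} $U^\ast$. Concretely, I would ask for a policy $\pi'$ with
\[
U^\ast(\pi')+B_H(\pi')>\max_{\pi\in\arg\max U^\ast}\bigl(U^\ast(\pi)+B_H(\pi)\bigr).
\]
Under this condition no $U^\ast$-maximizer can maximize $U_H$, because $\pi'$ strictly beats all of them. Disjointness follows, and with it $\gamma_H^{\mathrm{RL}}>0$. Alternatively, in the linear/simplex setting I would assume $\arg\max(r^\ast+B_H)\cap\arg\max r^\ast=\emptyset$; this is the same condition read pointwise on responses. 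Either way the proof reduces to the compactness argument above, and the substantive content moves into the separation hypothesis, exactly as in Theorem~\ref{thm:HBI}.
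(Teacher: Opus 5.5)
Your proposal is correct, and the gap you anticipated is real. It lies in the statement and in the paper's proof, not in your argument.

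\textbf{The counterexample.} Yours is valid: on the simplex over a finite response set, $U^\ast(\pi)=\E_{y\sim\pi}[r^\ast(y)]$ and $B_H(\pi)=\E_{y\sim\pi}[r^\ast(y)]$ with $r^\ast$ nonconstant satisfy both hypotheses. Yet $U_H=2U^\ast$ has the same maximizers, so the gap is $0$. An even simpler case: if $U^\ast$ is constant, every policy maximizes $U^\ast$ and the gap vanishes for every $B_H$. So no hypothesis on $B_H$ alone can ever suffice, and any repair must couple $B_H$ with $U^\ast$, as yours does.

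\textbf{Where the paper's proof breaks.} The paper argues by contradiction. A zero gap gives $\pi_H^\ast\in\arg\max U^\ast$, and testing $U_H$-optimality at $\pi=\pi^\ast$ gives $B_H(\pi_H^\ast)\ge B_H(\pi^\ast)$. It then asserts the reverse inequality ``by symmetry'' and constancy of $B_H$ by ``repeating for arbitrary $\pi$''. Neither step is valid:
\begin{itemize}
\item The symmetric inequality would need $\pi^\ast\in\arg\max U_H$, which is not given.
\item For general $\pi$, $U_H(\pi_H^\ast)\ge U_H(\pi)$ only yields $B_H(\pi)\le B_H(\pi_H^\ast)+U^\ast(\pi_H^\ast)-U^\ast(\pi)$. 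This permits $B_H(\pi)>B_H(\pi_H^\ast)$ whenever $\pi\notin\arg\max U^\ast$.
\end{itemize}
All the argument really shows is that $\pi_H^\ast$ maximizes $B_H$ over $\arg\max U^\ast$. That is compatible with nonconstant $B_H$, and it is exactly what happens in your example.

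\textbf{Your repair.} It is also correct. Your Weierstrass step shows both argmax sets are nonempty and compact; the paper's proof never actually uses hypothesis~1. Given that, your condition is equivalent to $\arg\max U_H\cap\arg\max U^\ast=\emptyset$. Compactness of $\arg\max U_H$ and continuity of $U^\ast$ then bound the gap below by $\gamma_H^{\mathrm{RL}}>0$, uniformly over the choice of maximizers, in parallel with Assumption~\ref{ass:min-sep}. Your simplex version is right as well, since the maximizers of a linear functional on the simplex are exactly the distributions supported on its pointwise argmax.

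As you acknowledge, though, disjointness is just the conclusion (for every selection of $\pi_H^\ast$) restated, so the repaired theorem is nearly definitional. If you want a hypothesis with independent content, closer in spirit to the paper's ``non-constant'' assumption, a first-order tilt works. Take $U^\ast$ and $B_H$ differentiable, with every $U^\ast$-maximizer in the interior of the policy space. Then $\nabla B_H\neq 0$ on $\arg\max U^\ast$ forces $\nabla U_H\neq 0$ there, and hence disjointness.
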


\begin{proof}
Suppose equality holds.
Then $\pi_H^\ast$ also maximizes $U^\ast$.
Since it maximizes $U_H$,
\[
U^\ast(\pi_H^\ast)+B_H(\pi_H^\ast)
\ge
U^\ast(\pi)+B_H(\pi)
\quad \forall \pi.
\]
Setting $\pi=\pi^\ast$ and using optimality of $\pi^\ast$ for $U^\ast$ yields
\[
B_H(\pi_H^\ast) \ge B_H(\pi^\ast).
\]
By symmetry of the argument,
\(
B_H(\pi_H^\ast)=B_H(\pi^\ast).
\)
Repeating for arbitrary $\pi$ implies $B_H$ must be constant over the policy space,
contradicting assumption (2).
\end{proof}

\section{Breaking the Human-Bounded Limit}
\label{sec:breaking}

The HBI theorem applies only under Assumption~1, namely when all
information about $Y^\ast$ available during training flows exclusively
through the human supervision channel $P_H$. If an auxiliary channel
provides additional information about $Y^\ast$ beyond what is contained
in $S_H$, the excess-risk floor can strictly decrease or vanish.
The limitation is therefore structural but conditional: it depends on
the information geometry of the supervision pipeline.

\subsection{Auxiliary Information and Hybrid Supervision}

Let $S_A$ denote an auxiliary signal and $S_M$ a model-generated signal.
Hybrid supervision can be represented abstractly as
\begin{equation}
S_{\mathrm{mix}}
= F_{\alpha,\beta,\gamma}(S_H,S_M,S_A),
\qquad
\alpha+\beta+\gamma=1,
\label{eq:mix-schematic}
\end{equation}
where $F$ denotes an arbitrary measurable combination rule (e.g.,
weighted mixing, joint embeddings, multi-objective optimization, or
concatenation). This representation is schematic and does not assume
linearity.

By the chain rule for mutual information,
\begin{align}
I(Y^\ast; S_H,S_M,S_A)
&=
I(Y^\ast;S_H)
+ I(Y^\ast;S_M \mid S_H)
\nonumber \\
&\quad
+ I(Y^\ast;S_A \mid S_H,S_M)
\eqqcolon C_{\mathrm{mix}},
\label{eq:capacity-mix}
\end{align}
where $C_{\mathrm{mix}}$ denotes the effective supervision capacity under
hybrid signals. This decomposition requires no independence assumptions;
it simply quantifies how additional channels increase information about
$Y^\ast$.

\begin{proposition}[Auxiliary Channel Reduces the Floor]
\label{prop:aux-break}
Suppose $I(Y^\ast; S_A \mid S_H) > 0$.
Then
\[
C_{\mathrm{mix}} > I(Y^\ast; S_H),
\]
and the induced excess-risk floor under hybrid supervision,
denoted $\gamma_{H+A}^{(\cdot)}$, satisfies
\[
\gamma_{H+A}^{(\cdot)} \le \gamma_H^{(\cdot)},
\]
with strict inequality whenever the rate--distortion function is
strictly decreasing on the relevant interval.
Moreover, if $S_A$ is sufficient for $Y^\ast$ (i.e.,
$I(Y^\ast;S_A \mid S_H)=H(Y^\ast\mid S_H)$),
then the floor collapses to zero.
\end{proposition}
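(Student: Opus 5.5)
The proof has three parts, each mirroring a piece of the statement, and everything runs through the chain rule \eqref{eq:capacity-mix} and the rate--distortion argument of Theorem~\ref{thm:infotheory}.

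\emph{Strict capacity gain.} By \eqref{eq:capacity-mix},
\[
C_{\mathrm{mix}} = I(Y^\ast;S_H) + I(Y^\ast;S_M\mid S_H) + I(Y^\ast;S_A\mid S_H,S_M).
\]
The middle term is nonnegative, but the last term conditions on $S_M$, so the hypothesis $I(Y^\ast;S_A\mid S_H)>0$ does not apply to it directly. I will instead reorder the chain rule:
\[
C_{\mathrm{mix}} = I(Y^\ast;S_H) + I(Y^\ast;S_A\mid S_H) + I(Y^\ast;S_M\mid S_H,S_A).
\]
Here the second term is strictly positive by hypothesis and the third is nonnegative. This gives $C_{\mathrm{mix}}>I(Y^\ast;S_H)=C_H^{\mathrm{eff}}$ with no independence assumptions.

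\emph{Floor comparison.} Under hybrid supervision the Markov chain becomes $Y^\ast\to(S_H,S_M,S_A)\to\Theta$. Repeating the proof of Theorem~\ref{thm:infotheory} gives $R(D_\Theta)\le C_{\mathrm{mix}}$, and hence the distortion floor $D_{H+A}=R^{-1}(C_{\mathrm{mix}})$, where $R^{-1}$ is extended by $D^\ast$ whenever $C_{\mathrm{mix}}\ge R(D^\ast)$.

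\begin{itemize}
\item $R$ is nonincreasing and $C_{\mathrm{mix}}\ge C_H^{\mathrm{eff}}$, so $D_{H+A}\le D_H$. Multiplying by the constant $c$ from the distortion--risk inequality yields $\gamma_{H+A}\le\gamma_H$.
\item When $R$ is strictly decreasing and continuous on the interval between $D^\ast$ and $D_H$, its inverse is strictly decreasing there. Then $C_{\mathrm{mix}}>C_H^{\mathrm{eff}}$ gives $D_{H+A}<D_H$, which is the strict inequality.
\end{itemize}

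There is a point to flag. The statement indexes $\gamma^{(\cdot)}$ over all frameworks, but the argument only controls the information-theoretic floor. For the other frameworks I would either restrict the claim to that instantiation or define $\gamma_{H+A}$ as the rate--distortion bound. This interpretive choice is the main obstacle: the statement is only rigorous after it is made.

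\emph{Collapse.} If $I(Y^\ast;S_A\mid S_H)=H(Y^\ast\mid S_H)$, then $H(Y^\ast\mid S_H,S_A)=0$, so $Y^\ast$ is a measurable function of $(S_H,S_A)$.
\begin{itemize}
\item This needs $Y^\ast$ discrete so that entropy is meaningful; I would state that assumption explicitly.
\item A learner with unrestricted capacity, by Assumption~\ref{ass:ideal-opt} applied with the hybrid surrogate, can then represent this function and attain distortion $D^\ast$, so the floor is zero.
\item Equivalently, $C_{\mathrm{mix}}\ge H(Y^\ast)\ge R(D^\ast)$, and the extended inverse gives $D_{H+A}=D^\ast$, so $\gamma_{H+A}=0$.
\end{itemize}

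The subtle step is that achievability, not just the converse bound, is needed for the floor to actually vanish. I would handle it through the realizability of the decoding map in $\mathcal{F}$ together with the regularity conditions.
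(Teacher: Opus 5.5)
Your proposal is correct and follows the paper's own route: the chain rule gives $C_{\mathrm{mix}}>I(Y^\ast;S_H)$, rate--distortion monotonicity together with the distortion--risk inequality of Theorem~\ref{thm:infotheory} gives $\gamma_{H+A}\le\gamma_H$ (strict under strict monotonicity), and sufficiency drives the distortion bound down to $D^\ast$ for the collapse; along the way you state assumptions the paper leaves tacit (the floor read as the information-theoretic bound, discrete $Y^\ast$, achievability). Your reordering of the chain rule is a genuine tightening, because the paper's ``hence'' from the $S_H,S_M,S_A$ ordering holds only via $I(Y^\ast;S_M\mid S_H)+I(Y^\ast;S_A\mid S_H,S_M)=I(Y^\ast;S_M,S_A\mid S_H)\ge I(Y^\ast;S_A\mid S_H)$, a step the paper never states.
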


\begin{proof}
From \eqref{eq:capacity-mix},
\[
C_{\mathrm{mix}}
=
I(Y^\ast;S_H)
+
I(Y^\ast;S_M \mid S_H)
+
I(Y^\ast;S_A \mid S_H,S_M).
\]
Hence $C_{\mathrm{mix}} > I(Y^\ast;S_H)$ whenever
$I(Y^\ast;S_A \mid S_H)>0$.

By rate--distortion monotonicity, increasing channel capacity weakly
reduces the minimal achievable distortion. Since excess risk is
lower-bounded by distortion gap in the information-theoretic
instantiation, the induced floor under hybrid supervision cannot exceed
that under human-only supervision, and is strictly smaller under strict
monotonicity. If $S_A$ renders $Y^\ast$ identifiable up to Bayes-optimal
distortion, the achievable distortion equals the optimal distortion,
and the excess-risk floor vanishes.
\end{proof}

Auxiliary information therefore mitigates $\gamma_{\mathrm{noise}}^{(\cdot)}$,
reduces $\gamma_{\mathrm{pref}}^{(\cdot)}$, and can eliminate
$\gamma_{\mathrm{sem}}^{(\cdot)}$ whenever it resolves structure not
expressible through $P_H$. The bound is not architectural but
informational: it disappears precisely when the supervision channel
becomes sufficient.

\subsection{Supervision Regimes}

Three regimes follow directly from this analysis.

\paragraph*{Human-only (H)}
When supervision is dominated by $S_H$, the induced floor satisfies
\[
\gamma_H^{(\cdot)}
=
\gamma_{\mathrm{noise}}^{(\cdot)}
+
\gamma_{\mathrm{pref}}^{(\cdot)}
+
\gamma_{\mathrm{sem}}^{(\cdot)},
\]
and the learner converges to the human-aligned solution.

\paragraph*{Hybrid Human+Model (H+M)}
Model-generated signals can reduce variance and partially correct
annotation noise but do not introduce fundamentally new information
about $Y^\ast$. Consequently, the induced floor satisfies
\[
\gamma_{H+M}^{(\cdot)}
\le
\gamma_H^{(\cdot)},
\]
with noise effects attenuated while structural preference and semantic
distortions may remain.

\paragraph*{Hybrid with Auxiliary Channels (H+M+A)}
When auxiliary channels provide independent information about $Y^\ast$,
semantic compression can be removed and preference-induced gaps reduced,
yielding
\[
\gamma_{H+M+A}^{(\cdot)}
\le
\gamma_{H+M}^{(\cdot)}
\le
\gamma_H^{(\cdot)},
\]
with $\gamma_{H+M+A}^{(\cdot)}=0$ in the sufficient-channel case.

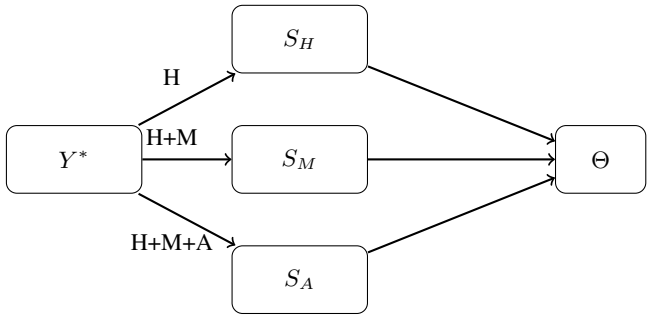
\begin{figure}[t]
\centering
\begin{tikzpicture}[
  font=\small,
  box/.style={draw, rounded corners, minimum width=1.8cm, minimum height=0.9cm, align=center}
]
\node[box] (Y)    at (0,0) {$Y^\ast$};
\node[box] (SH)   at (3,1.6) {$S_H$};
\node[box] (SM)   at (3,0)   {$S_M$};
\node[box] (SA)   at (3,-1.6) {$S_A$};
\node[box, minimum width=1.2cm] (Theta) at (7,0) {$\Theta$};

\draw[->, thick] (Y) -- (SH);
\draw[->, thick] (Y) -- (SM);
\draw[->, thick] (Y) -- (SA);
\draw[->, thick] (SH) -- (Theta);
\draw[->, thick] (SM) -- (Theta);
\draw[->, thick] (SA) -- (Theta);

\node at (1.3,1.1) {\small H};
\node at (1.3,0.3) {\small H+M};
\node at (1.3,-1.1) {\small H+M+A};
\end{tikzpicture}
\caption{Conceptual information flow under human-only (H), hybrid human+model (H+M), and hybrid with auxiliary channels (H+M+A). Auxiliary channels introduce additional information about $Y^\ast$, increasing effective supervision capacity and reducing or eliminating the structural excess-risk floor.}
\label{fig:hybrid-diagram}
\end{figure}

\section{Experiments}
\label{sec:experiments}

We empirically evaluate the structural predictions of the Human-Bounded Intelligence (HBI) theorem across three complementary regimes: (i) real human-preference data, (ii) controlled synthetic tasks with known ground truth, and (iii) externally verifiable objective benchmarks. We test whether hybrid supervision ($\alpha<1$) improves generalization, corruption robustness, scaling behavior, and distortion when $R^\ast$ is known. All metrics report pairwise accuracy on held-out comparisons with 95\% confidence intervals across three seeds.

\subsection{Real Preference Data}

Experiments use \texttt{Dahoas/full-hh-rlhf} \cite{dahoas2021hh} with 1000 training and 300 test pairs unless otherwise stated. Reward models (BERT-tiny, DistilRoBERTa-base, RoBERTa-base) are trained using Bradley--Terry loss. An auxiliary verifier (TinyLlama-1.1B-Chat-v1.0, 4-bit) provides $S_A(x,y) = -\mathrm{NLL}_{\mathrm{LLM}}(x,y)$, and hybrid scoring is defined as $S_{\alpha,\lambda} = \alpha S_M + (1-\alpha)\lambda S_A$.

\paragraph*{$\alpha$-Sweep}
Sweeping $\alpha\in\{0,0.25,0.5,0.75,1\}$ shows that human-only supervision ($\alpha=1$) is never optimal (Table~\ref{tab:alpha-summary}). Hybrid gains range from +0.007 to +0.059 and are largest in lower-capacity models, consistent with structural bottleneck effects predicted by HBI.

\begin{table}[t]
\centering
\footnotesize
\begin{tabular}{c|c|c}
\toprule
Model & Human-only & Best Hybrid \\
\midrule
BERT-tiny & 0.399 $\pm$ 0.022 & 0.458 $\pm$ 0.031 \\
DistilRoBERTa & 0.433 $\pm$ 0.006 & 0.440 $\pm$ 0.043 \\
RoBERTa-base & 0.441 $\pm$ 0.018 & 0.465 $\pm$ 0.021 \\
\bottomrule
\end{tabular}
\caption{Human-only versus best hybrid supervision}
\label{tab:alpha-summary}
\end{table}

\paragraph*{$\lambda$-Ablation}
Fixing $\alpha=0.5$, we sweep $\lambda\in\{0.5,1.0,2.0\}$. Results (Table~\ref{tab:lambda}) remain stable across $\lambda$, indicating that improvements arise from structural mixing rather than hyperparameter tuning.

\begin{table}[t]
\centering
\footnotesize
\begin{tabular}{c|ccc}
\toprule
Model & $\lambda=0.5$ & $\lambda=1.0$ & $\lambda=2.0$ \\
\midrule
BERT-tiny & 0.451 & 0.458 & 0.451 \\
DistilRoBERTa & 0.437 & 0.436 & 0.431 \\
RoBERTa-base & 0.462 & 0.465 & 0.463 \\
\bottomrule
\end{tabular}
\caption{$\lambda$-ablation}
\label{tab:lambda}
\end{table}

\paragraph*{Noise Robustness}
Human labels are flipped with probability $\gamma\in\{0,0.2,0.4\}$. As shown in Table~\ref{tab:noise}, hybrid supervision consistently mitigates degradation under corruption, reducing the noise component $\gamma_{\mathrm{noise}}$ predicted by the theoretical decomposition.

\begin{table}[t]
\centering
\footnotesize
\begin{tabular}{c|c|cc}
\toprule
Model & $\gamma$ & Human-only & Hybrid \\
\midrule
\multirow{3}{*}{BERT-tiny}
 & 0.0 & 0.399 & 0.458 \\
 & 0.2 & 0.392 & 0.462 \\
 & 0.4 & 0.389 & 0.462 \\
\midrule
\multirow{3}{*}{DistilRoBERTa}
 & 0.0 & 0.431 & 0.436 \\
 & 0.2 & 0.430 & 0.433 \\
 & 0.4 & 0.394 & 0.437 \\
\bottomrule
\end{tabular}
\caption{Robustness under human corruption}
\label{tab:noise}
\end{table}

\paragraph*{Scaling Behavior}
Training size varies over $N\in\{2000,4000,8000,16000\}$. Results in Table~\ref{tab:scaling} and Fig.~\ref{fig:real-scaling} show scaling reduces variance but does not eliminate the structural supervision gap: hybrid supervision matches or exceeds human-only performance across scales.

\begin{table}[t]
\centering
\footnotesize
\begin{tabular}{c|c|cc}
\toprule
Model & N & Human-only & Hybrid \\
\midrule
\multirow{4}{*}{BERT-tiny}
 & 2000 & 0.373 & 0.411 \\
 & 4000 & 0.410 & 0.409 \\
 & 8000 & 0.429 & 0.427 \\
 & 16000 & 0.437 & 0.461 \\
\midrule
\multirow{4}{*}{DistilRoBERTa}
 & 2000 & 0.355 & 0.395 \\
 & 4000 & 0.406 & 0.409 \\
 & 8000 & 0.446 & 0.430 \\
 & 16000 & 0.441 & 0.447 \\
\midrule
\multirow{4}{*}{RoBERTa-base}
 & 2000 & 0.349 & 0.384 \\
 & 4000 & 0.397 & 0.409 \\
 & 8000 & 0.444 & 0.428 \\
 & 16000 & 0.457 & 0.465 \\
\bottomrule
\end{tabular}
\caption{Scaling with dataset size}
\label{tab:scaling}
\end{table}

\begin{figure}[t]
\centering
\includegraphics[width=0.48\textwidth]{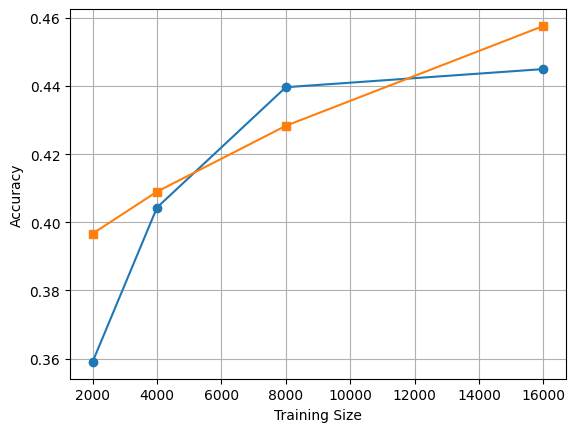}
\caption{Real-data scaling behavior. Pairwise accuracy versus training size for human-only supervision ($\alpha=1$, blue) and hybrid supervision ($\alpha=0.5$, orange). Hybrid supervision matches or exceeds human-only performance across scales, while scaling alone does not eliminate the structural supervision gap}
\label{fig:real-scaling}
\end{figure}

\subsection{Synthetic Known-Target Validation}

Synthetic experiments use a known reward $R^*(x,y)=w^\top \phi(x,y)$, allowing direct measurement of alignment error and distortion norms. As shown in Table~\ref{tab:synthetic} and Fig.~\ref{fig:synthetic-scaling}, distortion and alignment error increase monotonically toward human-only supervision ($\alpha=1$), confirming the predicted structural trajectory.

\begin{table}[t]
\centering
\footnotesize
\begin{tabular}{c|ccc}
\toprule
$\alpha$ & Accuracy & Alignment Error & Distortion Norm \\
\midrule
0.00 & 0.510 & 0.329 & 0.632 \\
0.25 & 0.513 & 0.534 & 0.670 \\
0.50 & 0.510 & 1.172 & 1.136 \\
0.75 & 0.502 & 2.797 & 2.658 \\
1.00 & 0.493 & 5.197 & 4.965 \\
\bottomrule
\end{tabular}
\caption{Synthetic validation: pairwise accuracy, alignment error, and distortion norm as a function of $\alpha$}
\label{tab:synthetic}
\end{table}

\begin{figure}[t]
\centering
\includegraphics[width=0.48\textwidth]{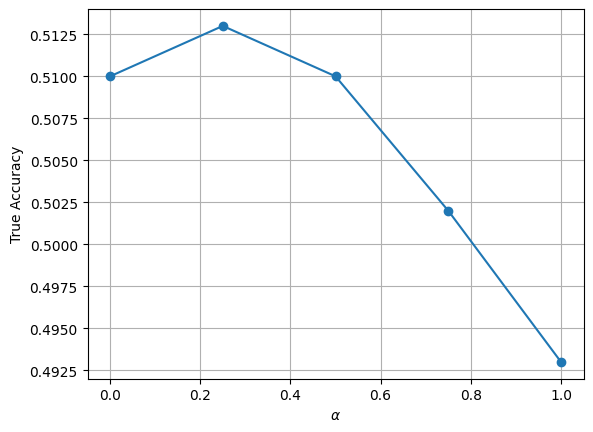}
\caption{Synthetic distortion trajectory. Objective accuracy as a function of the human-weight parameter $\alpha$ in the known-target synthetic task. Distortion increases monotonically toward human-only supervision ($\alpha = 1$), confirming the predicted structural alignment gap.}
\label{fig:synthetic-scaling}
\end{figure}

\subsection{Structural Supervision on GSM8K}

We evaluate HBI in an externally verifiable regime using GSM8K. Informative pairs consist of one correct and one incorrect solution generated by Mistral-7B-Instruct (4-bit). The auxiliary channel is defined as $S_A(x,y)=\mathbf{1}\{g(y)=Y^*(x)\}$, where $y$ is the generated completion, $g(y)$ extracts the predicted answer, and $Y^*(x)$ denotes the ground-truth target. Hybrid scoring is $S_\alpha = \alpha S_H + (1-\alpha) S_A$.

Objective accuracy as a function of $\alpha$ is reported in Table~\ref{tab:gsm8k-alpha}. Human-only supervision exhibits a persistent error floor, while hybrid supervision strictly dominates for all $\alpha<1$ and converges to perfect accuracy as $\alpha\rightarrow 0$, demonstrating auxiliary sufficiency.

\begin{table}[t]
\centering
\footnotesize
\begin{tabular}{c|c}
\toprule
$\alpha$ & Objective Accuracy \\
\midrule
0.00 & 1.000 \\
0.25 & 1.000 \\
0.50 & 0.983 \\
0.75 & 0.868 \\
1.00 & 0.696 \\
\bottomrule
\end{tabular}
\caption{Objective pairwise accuracy on GSM8K as a function of $\alpha$}
\label{tab:gsm8k-alpha}
\end{table}

\subsection{HumanEval: Auxiliary Sufficiency and Normalization Effects}

On \texttt{HumanEval} \cite{chen2021evaluating}, informative pairs consist of one functionally correct and one incorrect completion. The auxiliary channel encodes binary correctness $S_A(x,y) = \mathbf{1}\{\text{pass}\}$, while the human channel $S_H$ is a learned stylistic reward model. Hybrid scores use $s_{\text{hybrid}} = \alpha z(S_H) + (1-\alpha) z(S_A)$ with $\alpha=0.5$, where $z(\cdot)$ denotes batch $z$-score normalization. Across corruption levels $\gamma \in \{0,0.2,0.4\}$, human-only and hybrid supervision both yield $0.481 \pm 0.019$, while auxiliary-only achieves $1.000$, and Gaussian and shuffled null controls match human-only performance. This illustrates two boundary properties: (i) the human reward model exhibits a persistent structural floor relative to functional correctness, and (ii) auxiliary sufficiency collapses the floor when correctness is directly revealed.

The equality between human-only and hybrid performance arises from normalization-induced interaction suppression: because $S_A$ is binary and perfectly separable, batch $z$-score normalization removes auxiliary variance within pairs, causing convex mixing to preserve the ranking induced by $S_H$. The absence of hybrid improvement therefore reflects a normalization artifact rather than a contradiction of HBI.

Across real preference data, synthetic validation, and objective benchmarks (Tables~\ref{tab:alpha-summary}--\ref{tab:gsm8k-alpha}, Figs.~\ref{fig:real-scaling}--\ref{fig:synthetic-scaling}), results consistently support the central thesis: supervision-channel structure—not scale alone—determines error floors and distortion behavior, and auxiliary information expands effective learning capacity.

\section{Discussion and Conclusion}
\label{sec:discussion}

We developed a unified theory of human-supervised learning across six
frameworks—operator theory, PAC-Bayes, information theory, causal
inference, category theory, and a game-theoretic analysis of reinforcement
learning from human feedback—showing that when supervision is dominated
by the human channel $P_H$, learning is constrained by a strictly positive
excess-risk floor
\[
\gamma_H^{(\cdot)}
=
\gamma_{\mathrm{noise}}^{(\cdot)}
+
\gamma_{\mathrm{pref}}^{(\cdot)}
+
\gamma_{\mathrm{sem}}^{(\cdot)},
\]
arising from annotation noise, preference distortion, and semantic
compression. Although the numerical bounds differ across analytical
perspectives, their positivity reflects the same structural source:
non-sufficiency of the supervision channel. The limitation is
informational rather than architectural—scaling model size, data, or
compute cannot recover information that never passes through $P_H$.

Empirically, three regimes emerge. Human-only (H) supervision exhibits a
persistent floor. Hybrid human+model (H+M) supervision reduces variance
but retains structural distortions. When auxiliary channels provide
independent information about $Y^\ast$ (H+M+A), the floor weakly
decreases and collapses under sufficient information.

Experiments across real preference data, synthetic known-target tasks,
and externally verifiable benchmarks confirm these structural
signatures. In GSM8K, auxiliary correctness eliminates the floor,
instantiating the prediction that increased channel capacity reduces
distortion. In HumanEval, normalization-induced degeneracy shows that
observable hybrid gains require non-degenerate auxiliary variance.

The framework applies only when human supervision is the primary
information source about $Y^\ast$. Auxiliary channels—such as program
execution, retrieval, and verifiers—expand effective information
capacity and can remove the human bottleneck when they supply
independent signal. Hybrid systems therefore alter the supervision
channel itself rather than merely improving optimization within it.

Limitations include asymptotic optimization assumptions, semantic
abstraction, and mid-scale experiments. These affect convergence rates
but not the structural bound under information-reducing supervision.
Future work includes modeling tool-assisted supervision, estimating
$\gamma_{\mathrm{pref}}^{(\cdot)}$ and
$\gamma_{\mathrm{sem}}^{(\cdot)}$ in real datasets, extending the
information-theoretic analysis, and studying auxiliary-channel dynamics
in tool-integrated agents.

\bibliographystyle{IEEEtran}
\bibliography{ref}

\end{document}